\newtheorem{theorem}{Theorem}
\newtheorem{proof}{Proof}
\newcommand{\down}{\text{down}}
\newcommand{\tabincell}[2]{\begin{tabular}{@{}#1@{}}#2\end{tabular}}
\title{MoLAE: Mixture of Latent Experts for Parameter-Efficient Language Models}
\author{
Zehua Liu, Han Wu, Ruifeng She, Xiaojin Fu, Xiongwei Han, Tao Zhong, Mingxuan Yuan \\Huawei Noah's Ark Lab \\
     \texttt{liuzehua@connect.hku.hk} 
}
\begin{document}

\maketitle

\begin{abstract}
Mixture of Experts (MoE) has become a key architectural paradigm for efficiently scaling Large Language Models (LLMs) by selectively activating a subset of parameters for each input token. However, standard MoE architectures face significant challenges, including high memory consumption and communication overhead during distributed training. In this paper, we introduce Mixture of Latent Experts (MoLAE), a novel parameterization that addresses these limitations by reformulating expert operations through a shared projection into a lower-dimensional latent space, followed by expert-specific transformations. This factorized approach substantially reduces parameter count and computational requirements, particularly in existing LLMs where hidden dimensions significantly exceed MoE intermediate dimensions. We provide a rigorous mathematical framework for transforming pre-trained MoE models into MoLAE architecture, characterizing conditions for optimal factorization, and developing a systematic two-step algorithm for this conversion. Our comprehensive theoretical analysis demonstrates that MoLAE significantly improves efficiency across multiple dimensions while preserving model capabilities. Experimental results confirm that MoLAE achieves comparable performance to standard MoE with substantially reduced resource requirements. 
\end{abstract}

\section{Introduction} \label{sec:intro}

Large Language Models (LLMs) have demonstrated remarkable capabilities across diverse natural language processing tasks \citep{bommasani_2021_opportunities, zhuang_2020_comprehensive}, from text generation \citep{achiam_2023_gpt,dubey_2024_llama} to complex reasoning \citep{guo_2025_deepseek}. As these models scale to increasingly larger parameter spaces, the Mixture of Experts (MoE) architecture \citep{jacobs_1991_adaptive,jordan_1994_hierarchical} has emerged as a promising paradigm for efficiently scaling model capacity without proportionally increasing computational costs. By selectively activating only a subset of parameters for each input token, MoE models achieve parameter efficiency while maintaining manageable inference latency.

Despite their theoretical and empirical advantages, standard MoE architectures \citep{dai_2024_deepseekmoe} face significant practical limitations that inhibit broader deployment. These models require substantial memory resources to store parameters across numerous expert modules in Feed-Forward Network (FFN) layers and create communication bottlenecks during distributed training due to all-to-all data transfers. These challenges become increasingly pronounced as models scale to hundreds of experts, potentially limiting their applicability in resource-constrained environments.
Through systematic investigation of parameter utilization in MoE architectures, we discover substantial redundancy within the FFN layers of current MoE models. By analyzing Qwen1.5-MoE-A2.7B \citep{qwen_2024_qwenmoe}, we reveal that a significant proportion of parameters in its FFN layers can be effectively approximated through lower-dimensional representations without compromising model performance. This empirical observation motivates a fundamental rethinking of expert parameterization in neural architectures.


\begin{figure}[t!]
\centering
\includegraphics[width=1.0\textwidth]{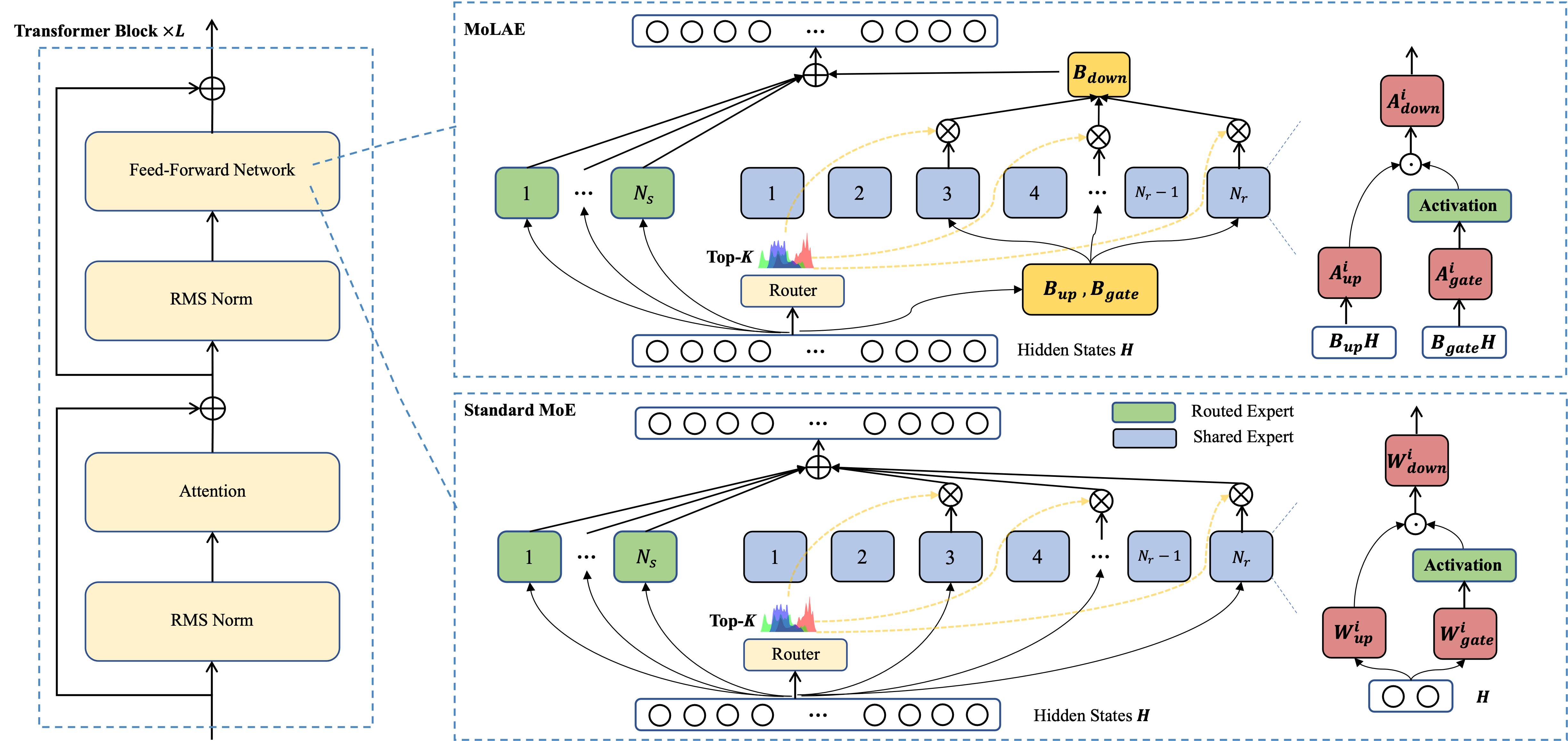} 
\caption{Architectural comparison between MoE and MoLAE in the FFN layer. In both diagrams, $N_r$ denotes the number of routed experts. MoLAE extends the conventional MoE architecture by introducing latent mappings $B_{\text{up}}$, $B_{\text{gate}}$, and $B_{\text{down}}$ that capture shared information across experts. Expert-specific information is encapsulated in the mappings $A_{\text{up}}^i$, $A_{\text{down}}^i$, and $A_{\text{gate}}^i$ for each expert $i$.}
\label{fig:1}
\vspace{-1em}
\end{figure}

In this work, we introduce \textbf{M}ixture \textbf{o}f \textbf{LA}tent \textbf{E}xperts (\textbf{MoLAE}), a novel parameterization paradigm that addresses the core inefficiencies of traditional MoE architectures. Our key insight is that expert modules in standard MoE models contain significant redundancy and operate in unnecessarily high-dimensional spaces. MoLAE reformulates each expert operation through a mathematically principled two-phase transformation: (1) a shared projection into a compressed latent space, followed by (2) expert-specific transformations within this lower-dimensional manifold.

Formally, MoLAE implements this insight by factorizing each expert's weight matrix $W^i \in \mathbb{R}^{m \times n}$ into the product $A^i B$, where $A^i \in \mathbb{R}^{m \times m}$ represents expert-specific transformations and $B \in \mathbb{R}^{m \times n}$ represents a shared projection into a latent space of dimension. This factorization yields substantial parameter reduction, particularly in contemporary LLM architectures where the hidden dimension $n$ significantly exceeds the MoE intermediate dimension $m$.

Our contributions are as follows:
1) We propose MoLAE, a parameter-efficient architecture that achieves competitive performance with standard MoE models while requiring significantly fewer parameters and reduced computational overhead.
2) We develop a theoretically grounded framework for transforming pre-trained MoE models into the MoLAE architecture, including a mathematical characterization of optimal factorization conditions and an efficient two-stage algorithm incorporating low-rank approximation techniques.
3) Through comprehensive empirical evaluation on multiple benchmark datasets, we demonstrate that MoLAE preserves or enhances model capabilities across diverse language tasks while substantially improving parameter efficiency, thereby enabling more economical scaling of large language models.

\section{Related Works} \label{sec:related}

\textbf{Finer-Grained Mixture of Experts}.
Mixture of Experts (MoE), initially introduced by \citet{jacobs_1991_adaptive} and \citet{jordan_1994_hierarchical}, has garnered significant attention in recent years \citep{aljundi_2017_expert, collobert_2001_parallel, deisenroth_2015_distributed, eigen_2013_learning, rasmussen_2001_infinite, shahbaba_2009_nonlinear, theis_2015_generative}. \citet{lepikhin_2020_gshard} pioneered the integration of MoE technology into transformer architectures, enabling substantial parameter scaling while maintaining computational efficiency. 
Subsequently, numerous studies have advanced MoE algorithms, particularly focusing on replacing feed-forward network (FFN) layers with MoE layers in transformer-based Large Language Models (LLMs) \citep{dai_2024_deepseekmoe, du_2022_glam, fedus_2022_switch, xue_2024_openmoe, zoph_2022_st}.

However, conventional GShard models exhibit limitations in capturing domain-specific expertise due to their relatively small number of experts. To address this constraint and enhance expert specialization, finer-grained MoE architectures were proposed by \citet{dai_2024_deepseekmoe} and subsequently implemented in several state-of-the-art models \citep{guo_2025_deepseek, liu_2024_deepseek, qwen_2024_qwenmoe}. In contrast to traditional GShard MoE designs, finer-grained variants incorporate substantially more experts, each with reduced parameter counts, enabling greater specialization in domain-specific knowledge representation and processing. This approach not only refines the decomposition of knowledge across experts, facilitating more precise learning, but also enhances the flexibility of expert activation combinations, allowing for more specialized and targeted knowledge capture.

\textbf{Algorithmic Design of MoE}.
The introduction of expert modules in LLMs introduces several algorithmic challenges that must be addressed during both training and inference phases. A critical aspect of MoE design is the gating function, which orchestrates the engagement of expert computations and the combination of their respective outputs. The gating mechanisms can be broadly categorized into three types: sparse, which activates a subset of experts; dense, which activates all experts; and soft, which encompasses fully-differentiable approaches including input token merging and expert merging \citep{pan_2024_dsmoe, zadouri_2023_mov, puigcerver_2022_softmoe}.

Sparse token-choice gating, where the gating function selects top-k experts for each input token, is the most prevalent approach \citep{fedus_2022_switch, lepikhin_2020_gshard, zoph_2022_st}. This method is often augmented with auxiliary loss functions to promote balanced expert utilization \citep{lepikhin_2020_gshard, fedus_2022_switch, du_2022_glam}. Alternative approaches include expert-choice gating, where each expert selects the top-k tokens they will process \citep{zhou_2022_expertchoice, zhou_2023_brainformers}, and non-trainable gating mechanisms that use predetermined routing strategies \citep{roller_2021_hash, costa_2022_thor, gururangan_2021_demix}.

A promising recent development in MoE is the integration with parameter-efficient fine-tuning (PEFT) techniques, creating Mixture of Parameter-Efficient Experts (MoPEs) \citep{zhang_2021_adammix, wu_2022_loramoe, ye_2023_moela}. These approaches combine the task versatility of MoE with the resource efficiency of PEFT, positioning them as a significant advancement in efficient multi-task learning.


\section{Redundancy in Standard MoE models} \label{sec:bg}

\subsection{Background: Standard MoE Architecture}
Finer-grained MoE architectures for FFNs employ $N$ distinct experts \citep{dai_2024_deepseekmoe}. For each expert $E_i(x)$ where $i \in \{1, 2, \ldots, N\}$, the computation is defined as:
\begin{equation} \label{equ:expert}
E_i(x) := W^i_{\text{down}} \left( W^i_{\text{up}}(x) \odot \textsc{Act} \left( W_{\text{gate}}(x) \right) \right),
\end{equation}
where $\textsc{Act}$ represents the activation function, $W_{\text{up}}, W_{\text{gate}} \in \mathbb{R}^{m \times n}$, and $W_{\text{down}} \in \mathbb{R}^{n \times m}$ are linear operators. In this context, $n$ denotes the hidden dimension and $m$ represents the MoE intermediate dimension, where typically $m \leq n$.
For input $x$, the FFN layer output is computed as:
\begin{equation} \label{equ:ffn}
y = x + \sum_{i=1}^N g_i(x) E_i(x), 
\end{equation}
where $g_i: \mathbb{R}^n \rightarrow \mathbb{R}$ is the router function that determines the contribution of each expert.

While empirical evidence demonstrates that increasing the number of experts leads to superior performance across various applications, this approach introduces significant challenges. The proliferation of parameters results in substantially increased storage requirements and all-to-all network communication overhead, limiting scalability and efficiency.

\subsection{Parameter Redundancy in MoE Models} \label{sec:appendix1}
In this section, we provide empirical evidence for significant parameter redundancy within FFN layers, substantiating the theoretical framework presented in Section \ref{sec:theory}. We conduct analysis using the Qwen1.5-MoE-A2.7B \citep{qwen_2024_qwenmoe} model, a popular MoE model comprising 14.3B parameters while activating only 2.7B parameters during inference, with 60 distinct experts.

For our analysis, we define a ratio-$r$ low-rank approximation $\tilde{W}$ of any matrix $W$ as a matrix whose rank equals $r \times \text{rank}(W)$, where $0 < r \leq 1$. In accordance with the Eckart-Young-Mirsky theorem \citep{schmidt_1989_theorie}, these approximations are computed via Singular Value Decomposition (SVD), retaining only the largest $r \times \text{rank}(W)$ singular values and their corresponding singular vectors.
To rigorously assess model capabilities under low-rank constraints, we evaluate performance on three benchmark tasks: MMLU \citep{hendrycks_2021_measuring}, GSM8K \citep{cobbe_2021_gsm8k}, and Wikitext-2 \citep{merity_2016_pointer}. All experiments were conducted using the lm-evaluation-harness evaluation framework \citep{gao_2024_framework}.
Table \ref{table:4} presents the comparative results.

\begin{table}[htbp]
\begin{center}
\caption{Performance comparison of MoE models with varying low-rank approximation ratios across multiple benchmarks.}
\begin{tabular}{c | c c c}
\toprule
\textbf{Low-rank ratio $r$} & \textbf{GSM8K} (\%) $\uparrow$ & \textbf{MMLU} (\%) $\uparrow$ & \textbf{Wikitext PPL} $\downarrow$ \\ 
\midrule
$1.0$ (Original) & 60.1 & \textbf{61.0} & \textbf{9.49} \\
$0.8$ & \textbf{61.2} & 60.3 & 9.65 \\
$0.6$ & 60.1 & 59.4 & 9.85 \\
\bottomrule
\end{tabular}
\label{table:4}
\end{center}
\end{table}

Table \ref{table:4} illustrates the relationship between rank reduction and model performance. The baseline case ($r = 1.0$) represents the original, unmodified model with full-rank weight matrices. Notably, when reducing the rank of FFN operators by 20\% ($r = 0.8$), we observe no significant performance degradation. In fact, the reduced-rank model demonstrates superior performance on the GSM8K benchmark, achieving a 1.1 percentage point improvement over the original model, while maintaining comparable performance on MMLU accuracy and Wikitext-2 PPL.
These empirical findings provide compelling evidence that, despite the mathematical full-rank property of FFN weight matrices, a substantial proportion of parameters contain redundant information that can be effectively approximated through lower-dimensional representations. This parameter redundancy phenomenon forms the empirical foundation for our theoretical analysis in Section \ref{sec:theory}.
\section{MoLAE: Mixture of Latent Experts}
\label{sec:mole_structure}

In this section, we introduce the Mixture of Latent Experts (MoLAE), a novel framework that maps experts into latent space to address several limitations of traditional MoE models. For clarity and focus, we exclude shared-experts from our analysis throughout this paper.

\subsection{Mixture of Latent Experts: Concept and Design}
To address these limitations, we propose MoLAE framework, which fundamentally redefines the structure of FFN layers in expert-based systems.
Our approach is informed by a careful analysis of the expert computation in Equation \eqref{equ:expert}, which comprises three distinct operations:
\begin{enumerate}
\item \textbf{Projection in:} The input $x$ is mapped from the high-dimensional space $\mathbb{R}^n$ to a lower-dimensional space $\mathbb{R}^m$ via linear operators $W_{\text{up}}$ and $W_{\text{gate}}$.
\item \textbf{Non-linear transformation:} A one-layer neural network applies a non-linear transformation through the Hadamard product and activation function.
\item \textbf{Projection out:} The intermediate output is mapped back from the low-dimensional space to the original high-dimensional space.
\end{enumerate}

A critical insight is that the core functionality of experts primarily stems from the non-linear transformation in the second step. The projection operations in the first and third steps primarily serve to reduce computational complexity, potentially at the cost of limiting the expert's domain capacity.

Drawing inspiration from both Multi-Head Latent Attention (MLA) \citep{liu_2024_deepseek}, which introduces a ``latent space'' for KV caches in attention layers, and Grouped-Query Attention (GQA) \citep{ainslie_2023_gqa}, which leverages group-based processing, we propose the MoLAE, which operates on experts, i.e. specific FFN layers in standard MoE models.
This architecture fundamentally reconsiders how inputs are projected into a lower-dimensional latent space, enabling more efficient computation within the experts.

\subsection{Formulation of Mixture of Latent Experts}

To formalize the concept of latent experts, we examine the decomposition of expert-specific operators through matrix factorization. Using the ``up operator'' $W_{\text{up}}^i$ of expert $i$ as a representative example, we propose a structured factorization where:
\begin{equation}
W_{\text{up}}^i = A_{\text{up}}^i B_{\text{up}}.
\end{equation}
In this formulation, $B_{\text{up}} \in \mathbb{R}^{m \times n}$ functions as a unified projection operator shared across experts, mapping inputs from the high-dimensional space $\mathbb{R}^n$ to a lower-dimensional latent space $\mathbb{R}^m$, where typically $m \ll n$. Conversely, $A_{\text{up}}^i \in \mathbb{R}^{m \times m}$ represents an expert-specific linear transformation within this latent space, encapsulating the specialized function of expert $i$.
Following the terminology established in MLA, we designate $B_{\text{up}}$ as the latent mapping for the ``up operator''. 
This factorization principle extends systematically to the ``gate operator'' as well.
On the other side, for the ``down operator'' $W_{\text{down}}^i$, which maps from a lower-dimensional to a higher-dimensional space, the decomposition necessarily assumes a reverse form: $W_{\text{down}}^i = B_{\text{down}} A_{\text{down}}^i$, where $A_{\text{down}}^i \in \mathbb{R}^{m \times m}$ and $B_{\text{down}} \in \mathbb{R}^{n \times m}$.

To optimize the trade-off between model expressivity and parameter efficiency, we introduce a structured grouping mechanism where each subset of $k$ experts shares the same latent mapping matrices $B_{\text{up}}$ and $B_{\text{down}}$. This design establishes a configurable spectrum of architectural possibilities: when $k=1$, each expert maintains its independent latent space, and MoLAE becomes functionally equivalent to the standard MoE architecture. Conversely, as $k$ increases, the model achieves progressively higher parameter efficiency at a measured trade-off with expert specialization. This parameterization allows for systematic exploration of the efficiency-performance frontier in mixture-of-experts architectures.

We now provide a formal definition of the MoLAE architecture. Let $\lfloor \cdot \rfloor$ denote the floor function and $\{(A_{\text{up}}^i, A_{\text{gate}}^i, A_{\down}^i) \}_{i=1}^{N}$ represent the set of expert-specific latent transformations. The $i$-th expert is defined as:
\begin{equation} \label{equ:latent_expert}
E_i(x) = B_{\down}^{\lfloor  i / k\rfloor} A_{\down}^i \left( A_{\text{up}}^i B_{\text{up}}^{\lfloor i/k \rfloor}(x) \odot \textsc{Act}\left(A_{\text{gate}}^i B_{\text{gate}}^{\lfloor i/k \rfloor}(x)\right) \right),
\end{equation}
where $k$ is the group size of experts.
Consequently, the output of the FFN layer employing the MoLAE architecture is computed as:
\begin{equation}
y = x + \sum_{i=1}^N g_i(x) E_i(x).
\end{equation}

This formulation effectively disentangles expert-specific computations from the shared dimensionality reduction operations, enabling significant parameter efficiency while preserving model expressivity.
The visual comparison between MoE and MoLAE architectures is shown in Figure \ref{fig:1}.

\subsection{Efficiency Benefits of MoLAE}

The MoLAE architecture offers significant efficiency advantages over standard MoE models, spanning multiple computational dimensions from memory usage to communication overhead. To quantify these benefits, we provide a comparative analysis between MoE and MoLAE for a single FFN layer, assuming identical configurations with hidden dimension $n$, MoE intermediate dimension $m$, and number of experts $N$.

\begin{table}[htbp]
\begin{center}
\caption{Efficiency comparison between standard MoE and our proposed MoLAE architectures for a single FFN layer.}
\begin{tabular}{c c c}
\toprule
\textbf{Architecture} & \textbf{Parameter Count} & \textbf{FLOPs per Forward Pass} \\
\midrule 
Standard MoE & $3Nmn$ & $(3mn + 2m)N$ \\
\midrule
MoLAE (Ours) & $3Nm^2 + 3 \lfloor \frac{N}{k} \rfloor mn$ & $\left(3m^2 + 2m\right)N + 3 \lfloor \frac{N}{k} \rfloor mn$ \\
\bottomrule
\end{tabular}
\label{table:1}
\end{center}
\end{table}

\textbf{Parameter Efficiency}
As shown in Table \ref{table:1}, MoLAE substantially reduces the parameter count compared to standard MoE, particularly when $m \ll n$, which is the typical case in modern LLMs. For instance, in DeepSeek-V3, $n = 7168$ while $m = 2048$. The parameter reduction stems from our latent parameterization, where expert-specific transformations operate in the lower-dimensional latent space ($m \times m$) rather than directly on the high-dimensional hidden space ($m \times n$).

\textbf{Computational Efficiency}
Beyond parameter savings, MoLAE also reduces the computational cost measured in FLOPs. The efficiency gain becomes particularly pronounced when the number of experts $N$ is large and $k$ is small (meaning fewer latent projection matrices are used). This computational advantage translates to faster inference and training times, especially on hardware where memory bandwidth is a bottleneck.

\textbf{Communication Overhead Reduction}
A critical but often overlooked benefit of MoLAE is the reduction in all-to-all communication costs during distributed training and inference. In standard MoE models, the full expert parameters ($3Nmn$ in total) must be synchronized across devices. In contrast, MoLAE requires synchronization of significantly fewer parameters, reducing network bandwidth requirements and improving scalability for distributed deployments.

\textbf{Memory Access Patterns}
MoLAE also offers improved cache efficiency during computation. The smaller matrices used in latent transformations ($A^i \in \mathbb{R}^{m \times m}$) exhibit better locality of reference compared to the larger matrices in standard MoE ($W^i \in \mathbb{R}^{m \times n}$), potentially leading to higher utilization of fast cache memory and reduced main memory bandwidth demands.

\section{Transformation from MoE to MoLAE} \label{sec:theory}
In this section, we establish the theoretical foundation for transforming a standard MoE model into its corresponding MoLAE counterpart. We focus on the ``up operator'' as a representative example. For analytical clarity, we make two simplifying assumptions: (1) we omit the subscript of $W_{\text{up}}^i$ for ease of notation during this analysis, and (2) we consider the case where $k=N$, implying a single shared latent space operator for all experts.

\textbf{Keep the ``up operator''}.
Experiments on Qwen1.5-MoE-A2.7B indicate that transferring all operators during MoLAE transformation does not hurt performance.
However, for Moonlight-16B-A3B, empirical results (Appendix \ref{sec:appendix3}) show superior performance when preserving the ``up operator'' structure.
Thus, for Moonlight, only its ``down'' and ``gate'' operators are converted to the MoLAE structure, retaining the ``up operator''.
Conversely, all operators are transferred for Qwen models.

\subsection{Transformation via Matrix Factorization} \label{subsec:trans}

For a given weight matrix $W^i$ associated with expert $i$, we aim to find corresponding matrices $A^i$ and $B$ such that $W^i X \approx A^i B X$ for the activation $X$. This transformation represents the post-training perspective where we seek to convert pre-trained MoE parameters into the MoLAE architecture.

One direct approach is to determine matrices $A^i$ and $B$ such that $A^i B \approx W^i$ for all $i \in \{1, 2, \ldots, N\}$. This leads naturally to the following optimization problem:
\begin{equation} \label{equ:5}
\min_{A^i, B} \quad F(A^1, \cdots, A^N, B) := \frac{1}{2} \sum_{i=1}^N \| W^i - A^i B \|_F^2,
\end{equation}
where $\| \cdot \|_F$ denotes the Frobenius norm. 
Problem \eqref{equ:5} admits the theoretical optimal solutions by using the SVD decomposition technique.

\textbf{Closed-Form Solution via Singular Value Decomposition}.
Obviously, problem \eqref{equ:5} has infinitely many solutions since one can change one optimal solution $(A^*, B^*)$ into another by selecting a constant $\lambda$ and obtain another optimal solution $(\lambda A^*, \frac{1}{\lambda} B^*)$.
Hence, we only provide ``one'' optimal solution here.

To derive a closed-form solution to problem \eqref{equ:5}, we consolidate the matrices into the following block structures:
\begin{equation}
W = \begin{pmatrix}
W^1 \\ 
W^2 \\ 
\vdots \\
W^N
\end{pmatrix}, \quad 
A = \begin{pmatrix}
A^1 \\ 
A^2 \\ 
\vdots \\
A^N
\end{pmatrix}.
\end{equation}
With this notation, problem \eqref{equ:5} can be reformulated as:
\begin{equation} \label{equ:6}
\min_{A, B} \quad \frac{1}{2} \| W - AB \|_F^2,
\end{equation}
where $A \in \mathbb{R}^{mN \times m}$ and $B \in \mathbb{R}^{m \times n}$.

While problem \eqref{equ:6} has infinitely many solutions due to its underdetermined nature, the Eckart-Young-Mirsky theorem \citep{schmidt_1989_theorie} provides an optimal solution with respect to the Frobenius norm. Specifically, let $W = U \Sigma V^\top$ be the singular value decomposition (SVD) of $W$, where:
\begin{itemize}
\item $U \in \mathbb{R}^{mN \times mN}$ is an orthogonal matrix whose columns are the left singular vectors of $W$
\item $\Sigma \in \mathbb{R}^{mN \times n}$ is a rectangular diagonal matrix with singular values $\sigma_1 \geq \sigma_2 \geq \cdots \geq \sigma_{\min(n, mN)} \geq 0$ on its diagonal
\item $V \in \mathbb{R}^{n \times n}$ is an orthogonal matrix whose columns are the right singular vectors of $W$
\end{itemize}
Let $\Sigma_m$ be the truncated version of $\Sigma$ that retains only the $m$ largest singular values: $\Sigma_m = \text{diag}(\sigma_1, \sigma_2, \ldots, \sigma_m, 0, \ldots, 0) \in \mathbb{R}^{mN \times n}$.
According to the Eckart-Young-Mirsky theorem, an optimal solution to problem \eqref{equ:6} is given by:
\begin{equation}
A^* = U_m \Sigma_m^{1/2}, \quad B^* = \Sigma_m^{1/2} V_m^\top,
\end{equation}
where $U_m \in \mathbb{R}^{mN \times m}$ consists of the first $m$ columns of $U$, $\Sigma_m^{1/2} \in \mathbb{R}^{m \times m}$ is a diagonal matrix containing the square roots of the $m$ largest singular values, and $V_m \in \mathbb{R}^{n \times m}$ consists of the first $m$ columns of $V$.
This factorization yields the minimum Frobenius norm error among all rank-$m$ approximations of $W$, with the approximation error given by:
\begin{equation}
\|W - A^*B^*\|_F^2 = \sum_{i=m+1}^{\min(n,mN)} \sigma_i^2.
\end{equation}

To ensure the effectiveness of this transformation, we provide more details in Appendix \ref{subsec:residual} which discuss how to minimize the factorization residuals during the transformation.

\subsection{Transfer MoE to MoLAE: A Unified Framework}

Based on our theoretical analyses above, we propose a unified, systematic framework for transforming Mixture of Experts (MoE) models into their more parameter-efficient Mixture of Latent Experts (MoLAE) counterparts. Our framework consists of two principal steps, carefully designed to preserve model capabilities while enabling the latent parameterization.

\textbf{Step 1: Rank Reduction.} For each expert operator $W^i$, we compute a low-rank approximation $\tilde{W}^i$ that maintains the essential functionality of the original operator while increasing the dimension of its nullspace. This step is motivated by our theoretical analysis showing that larger nullspace intersections facilitate better factorization. We determine the optimal rank based on empirical validation to ensure minimal performance degradation.

\textbf{Step 2: Matrix Factorization.} Using the rank-reduced operators $\{\tilde{W}^i\}_{i=1}^N$, we apply matrix factorization techniques to identify the shared projection matrix $B$ and the expert-specific latent transformations $\{A^i\}_{i=1}^N$. For this step, we employ the SVD-based approach detailed in Section \ref{subsec:trans}, which provides the optimal factorization with respect to the Frobenius norm.

We formalize our approach in Algorithm \ref{alg:moe_to_mole}, which provides a complete computational procedure for transforming MoE parameters into the MoLAE architecture.

\begin{algorithm}
\small
\caption{Transformation of MoE to MoLAE}
\label{alg:moe_to_mole}
\begin{algorithmic}[1]
\Require Expert weight matrices $\{W^i\}_{i=1}^N$, target rank $r$, latent dimension $m$
\Ensure Latent expert matrices $\{A^i\}_{i=1}^N$, shared projection matrix $B$

\State // Step 1: Rank Reduction
\For{$i = 1$ to $N$}
    \State Compute SVD: $W^i = U^i \Sigma^i (V^i)^\top$
    \State Truncate to rank $r$: $\tilde{W}^i = U^i[:,:r] \cdot \Sigma^i[:r,:r] \cdot (V^i[:,:r])^\top$
\EndFor

\State // Step 2: Matrix Factorization
\State Construct concatenated matrix: $\tilde{W} = [\tilde{W}^1; \tilde{W}^2; \ldots; \tilde{W}^N]$
\State Compute SVD: $\tilde{W} = U \Sigma V^\top$
\State Extract first $m$ singular values and vectors
\State $A = U[:,:m] \cdot \Sigma[:m,:m]^{1/2}$
\State $B = \Sigma[:m,:m]^{1/2} \cdot V[:,:m]^\top$
\State Partition $A$ into $N$ blocks to obtain $\{A^i\}_{i=1}^N$

\State \Return $\{A^i\}_{i=1}^N, B$
\end{algorithmic}
\end{algorithm}

This algorithm provides a computationally efficient procedure for transforming standard MoE layers into MoLAE architecture. The rank reduction parameter $r$ and the latent dimension $m$ serve as hyperparameters that can be tuned to balance performance preservation against parameter efficiency. Typically, we set $r \leq m < \min(n, \sum_{i=1}^N m)$, where $n$ is the input dimension.

Our framework is applicable to all linear operators within an MoE layer, including the up, down, and gate operators. By applying this transformation to each operator independently, we can convert the MoE model to MoLAE while maintaining overall performance.
\section{Experiments} \label{sec:exp}
In this section, we evaluate the effectiveness of MoLAE on downstream tasks and the pre-training performance on GPT-2 \citep{radford_2019_languageMA}. More experiments, like the importance of different ``up, gate, down'' operators and model configurations, are provided in Appendix \ref{sec:appendix3}.

\begin{table}[t!]
\caption{Comparisons of Qwen1.5-MoE-A2.7B and Moonlight-16B-A3B to their MoLAE architectures. To simplify the computations of efficiency, we only consider the MoE parts of the model.}
\centering
\fontsize{10}{11} \selectfont
\def\arraystretch{1,2}
\begin{tabular}{ccccccc}
\toprule
\multirow{2}{*}{Models} & \multicolumn{5}{c}{Benchmarks} & \multirow{2}{*}{Params} \\
\cline{2-6}
& MMLU $\uparrow$ & GSM8K $\uparrow$ & CEval $\uparrow$& MNLI $\uparrow$ & Wiki PPL $\downarrow$\\ 
\midrule
Qwen1.5-MoE & 61.0 & 60.1 & 80.2 & 49.9 & 9.49 & 14.3B \\
\tabincell{c}{Qwen1.5-MoLAE\\($k$=10)} & 60.0 & 59.4 & 74.1 & 40.6 & 9.74 & 10.5B \\
\hdashline
Moonlight-MoE & 67.2 & 77.4 & 74.2 & 43.7 & 7.12 & 16.0B\\
\tabincell{c}{Moonlight-MoLAE\\($k$=8)} & 60.2 & 72.3 & 62.8 & 38.7 & 10.6 & 12.3B\\
\bottomrule
\end{tabular}
\label{table:5}
\vspace{-1em}
\end{table}

\subsection{Transformation from MoE to MoLAE: Downstream Tasks} \label{sec:appendix2}
We first present our empirical analysis of transforming standard MoE architectures into their corresponding MoLAE counterparts. We specifically investigate two popular MoE models, including Qwen1.5-MoE-A2.7B model \cite{qwen_2024_qwenmoe} and Moonlight-16B-A3B \citep{liu_2025_muon}, on diverse tasks, such as MMLU \citep{hendrycks_2021_measuring}, GSM8K \citep{cobbe_2021_gsm8k}, CEval \citep{huang_2023_ceval}, MNLI \citep{wang2019glue} and Wikitext-2 \citep{merity_2016_pointer}.

As shown in Table \ref{table:5}, MoLAE achieves performance comparable to that of the original MoE models, while exhibiting notable parameter efficiency and only slight performance degradations across the benchmarks. This result highlights the effectiveness of the transformation from MoE to MoLAE architectures. Furthermore, as indicated in Section \ref{sec:mole_structure}, the group size $k$ in MoLAE architecture represents a critical hyperparameter that controls the trade-off between parameter efficiency and model expressivity. We evaluate several distinct configurations of \( k \in \{1, 10, 20, 30, 60\} \) on the Qwen1.5-MoE model, which consists of 60 experts. Specifically, \( k=1 \) corresponds to the original MoE architecture; \( k=10 \) and \( k=20 \) represent balanced MoLAE configurations with multiple latent spaces; while \( k=30 \) and \( k=60 \) denote extreme cases with only one or two shared latent spaces, respectively.
As shown in Table \ref{fig:ablation_k}, with a moderate group size, such as \( k=10 \) for the Qwen1.5-MoE model, MoLAE largely preserves performance across various benchmarks. However, as the group size increases, the performance of the MoLAE model progressively deteriorates. For simpler tasks, such as CEval and MNLI, the performance decline is minimal, indicating that MoLAE retains most of the model's capacity. In contrast, for more challenging tasks, such as GSM8K, there is a significant performance degradation. This dramatic decline empirically supports our theoretical analysis, which suggests that multiple latent spaces are essential to maintain the capacity of the original MoE model.

\begin{figure}[t!]
  \begin{minipage}{0.45\textwidth}
    \includegraphics[width=\textwidth]{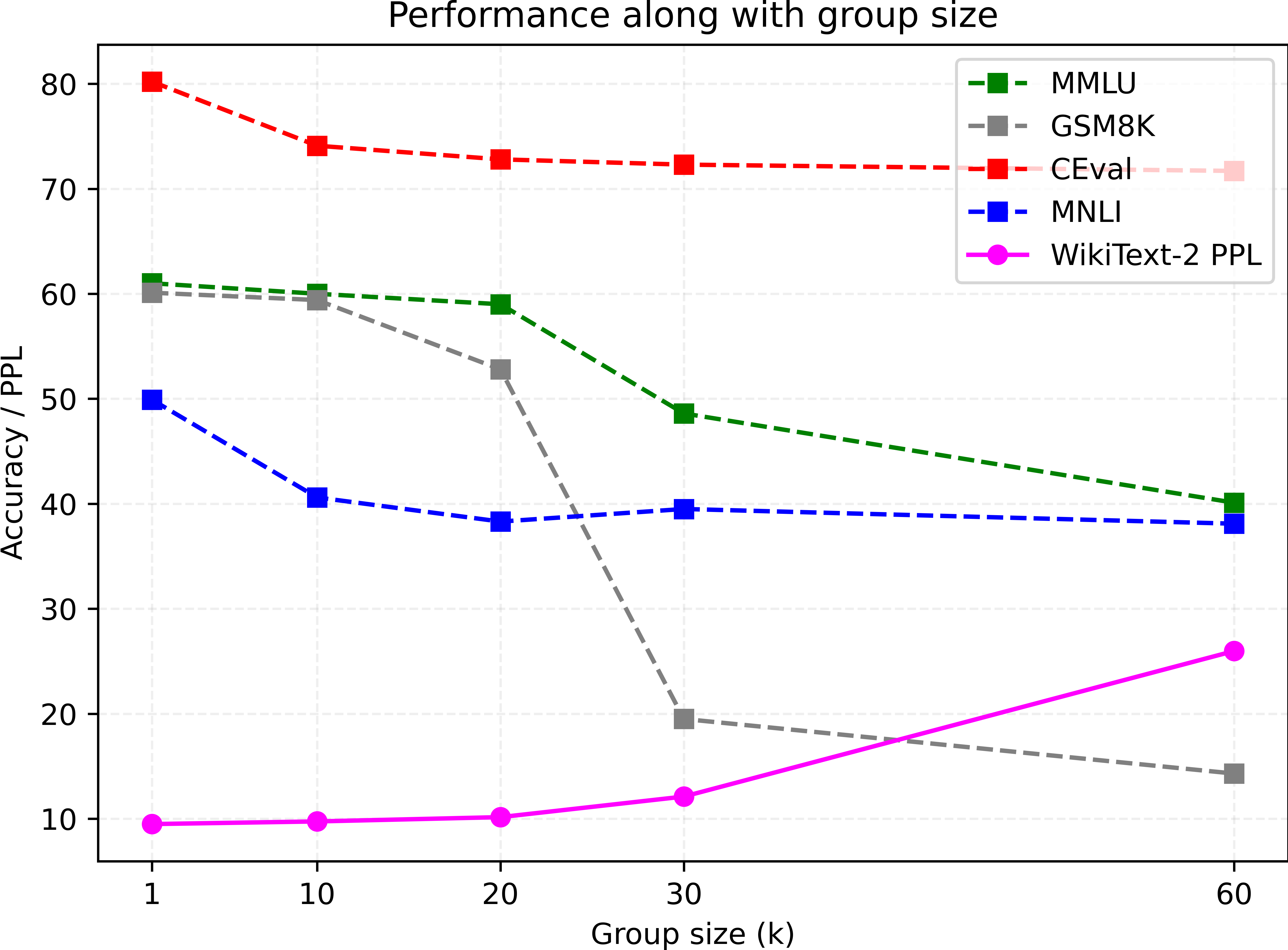}
    \captionof{figure}{Ablation study of group size $k$ on the Qwen1.5-MoE model.}
    \label{fig:ablation_k}
  \end{minipage}
  \hfill
  \begin{minipage}{0.5\textwidth}
    \includegraphics[width=\textwidth]{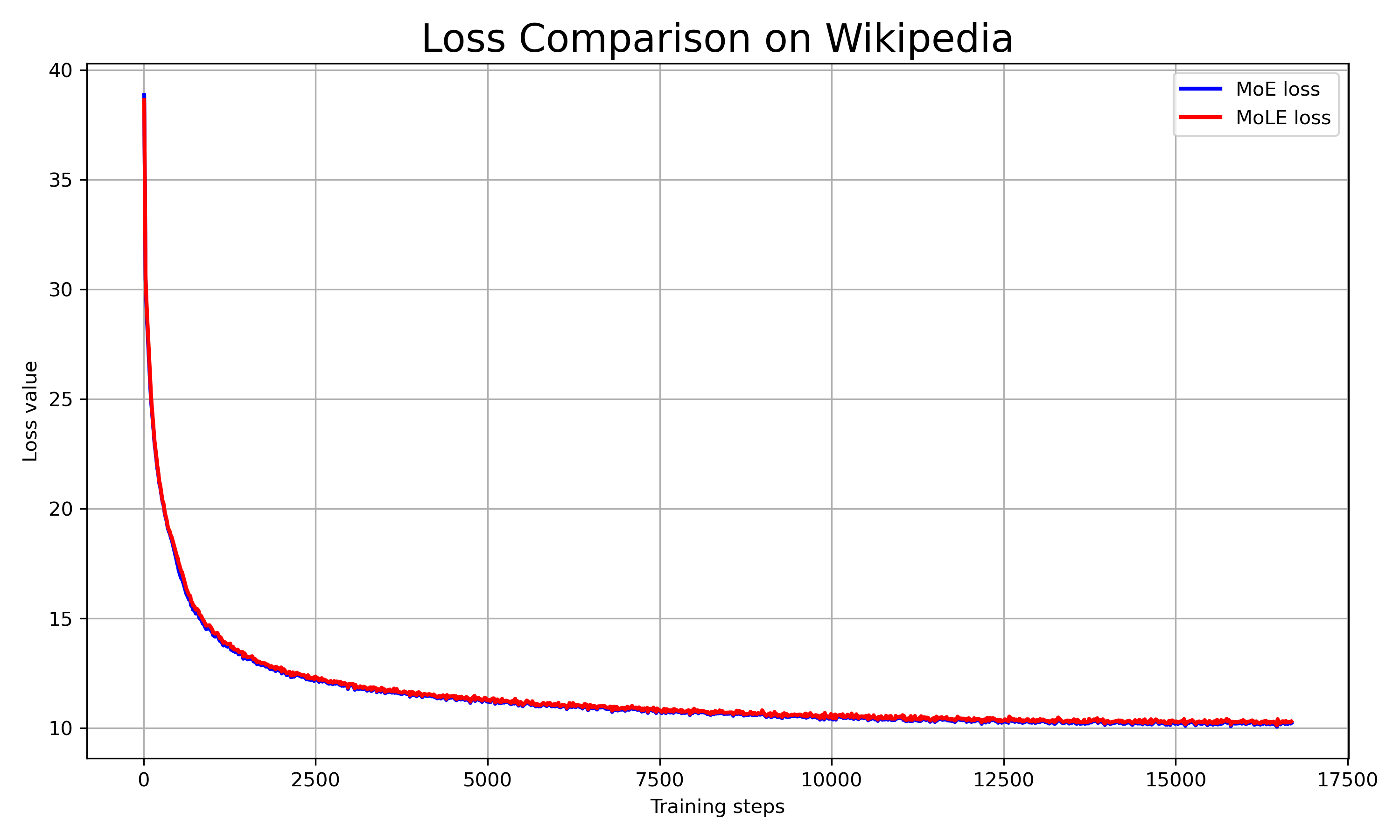}
    \captionof{figure}{Comparison of training loss curves between MoE and MoLAE models on the English Wikipedia dataset.}
    \label{fig:comparison}
  \end{minipage}
  \vspace{-1em}
\end{figure}

\subsection{Pretraining of MoLAE}
To further validate the effectiveness of MoLAE, we construct the paired MoE and MoLAE models derived from the GPT-2 model \citep{radford_2019_languageMA}, and then conduct the pretraining on the models.
The MoE (151M) and MoLAE (94M) models are configured with identical architectural parameters except for FFN layer structures, as detailed in Table \ref{table:2}. Both architectures implement $N = 32$ experts in their respective FFN layers. For MoLAE model, we establish $k = 8$, indicating that each group of $8$ experts shares a common latent representation space. All models are trained on the Wikipedia English \citep{wikidump} with maximum length of $512$. Models are updated using AdamW optimizer with consistent hyper-parameters across all runs.

\paragraph{Parameter Efficiency}
As quantified in Table \ref{table:2}, the introduction of shared latent spaces in MoLAE architecture yields a substantial reduction in model parameter count compared to the standard MoE architecture. Specifically, MoLAE achieves a 40\% reduction in non-embedding parameters while maintaining comparable model capacity. This parameter efficiency represents a significant advancement in model scalability without sacrificing performance.

\paragraph{Training Dynamics}
Figure \ref{fig:comparison} illustrates the pretraining convergence characteristics of both MoE and MoLAE models. The training loss trajectories reveal that MoLAE maintains competitive optimization dynamics despite its significantly reduced parameter count. Although the MoE model converges to marginally lower loss values, this difference is negligible when considering the substantial parameter efficiency gained with MoLAE. These results suggest that the shared latent representation in MoLAE effectively preserves the essential modeling capacity while eliminating redundant parameterization inherent in traditional MoE architectures. Using the well-trained MoE and MoLAE models, we evaluated their performance on the downstream task of Wikitext-2 perplexity. The results indicate that the performance gap between the two models is minimal, with PPL values of 79.5 and 81.5, respectively. This suggests that the MoLAE architecture serves as an effective pretraining base model, offering excellent efficiency while maintaining competitive performance.





\section{Conclusion} \label{sec:conclusion}



We introduce Mixture of Latent Experts (MoLAE), which overcomes limitations of traditional MoE models by factorizing expert weights into shared projections and expert-specific transformations in a lower-dimensional space. This approach reduces parameters and computation while maintaining performance across language tasks. Our theoretical framework for converting pre-trained MoE models to MoLAE provides insights into neural network redundancy. As models grow, such parameter-efficient architectures become increasingly valuable. Future work could extend these techniques to other transformer components and explore dynamic latent space adaptation.

\bibliographystyle{unsrtnat}
\bibliography{reference.bib}

\newpage
\begin{appendices}

\section{Minimizing Factorization Residuals} \label{subsec:residual}

In the previous subsection \ref{sec:theory}, we established methods for transforming MoE models into their MoLAE counterparts from a theoretical perspective. Here, we address a critical aspect of this transformation: minimizing the residual error that inevitably arises when factorizing expert weights. We begin by establishing the precise conditions under which exact factorization is possible.

\begin{theorem} \label{thm:1}
Given matrices $W^i \in \mathbb{R}^{m \times n}$ with $m \leq n$, there exist matrices $A^i \in \mathbb{R}^{m \times m}$ and a common matrix $B \in \mathbb{R}^{m \times n}$ such that $A^i B = W^i$ for all $i \in \{1,2,\ldots,N\}$, \textit{if and only if} there exists an $(n - m)$-dimensional subspace $K \subseteq \mathbb{R}^n$ satisfying:
\begin{equation}
K \subseteq \bigcap_{i=1}^N \ker(W^i).
\end{equation}
\end{theorem}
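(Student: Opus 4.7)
\textbf{Proof plan for Theorem \ref{thm:1}.}
The plan is to prove the two directions separately, using only linear-algebraic facts about kernels, right inverses, and rank.

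For the easy direction ($\Rightarrow$), I would start from the hypothesis $W^i = A^i B$ and immediately observe that $\ker(B) \subseteq \ker(A^i B) = \ker(W^i)$ for every $i$, so $\ker(B) \subseteq \bigcap_{i=1}^N \ker(W^i)$. Since $B \in \mathbb{R}^{m \times n}$ has rank at most $m$, the rank-nullity theorem gives $\dim \ker(B) \geq n - m$. Picking any $(n-m)$-dimensional subspace of $\ker(B)$ produces the required $K$.

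For the harder direction ($\Leftarrow$), suppose we are given an $(n-m)$-dimensional subspace $K \subseteq \bigcap_i \ker(W^i)$. The idea is to build $B$ so that $\ker(B) = K$ and $B$ has full row rank $m$; concretely, take the rows of $B$ to be any basis of $K^\perp \subseteq \mathbb{R}^n$, which has dimension $n - (n-m) = m$. By construction $B$ is surjective onto $\mathbb{R}^m$, so it admits a right inverse $B^+ \in \mathbb{R}^{n \times m}$ with $B B^+ = I_m$. I would then define
\begin{equation}
A^i := W^i B^+ \in \mathbb{R}^{m \times m}
\end{equation}
and verify $A^i B = W^i$ as follows.

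The verification is the only delicate step and is where I expect the main obstacle to lie, because $B^+ B$ is not $I_n$ in general but only an \emph{oblique} projection. Concretely, $(B^+ B)^2 = B^+ (B B^+) B = B^+ B$ and $\ker(B^+ B) = \ker(B) = K$ (using that $B^+$ is injective, since $BB^+ = I_m$ forces $B^+$ to have full column rank). Thus $P := I_n - B^+ B$ is a projection onto $K$ along the image of $B^+$. Since $K \subseteq \ker(W^i)$, we obtain $W^i P = 0$, i.e.\ $W^i B^+ B = W^i$, which yields $A^i B = W^i B^+ B = W^i$ as desired. Combining both directions completes the proof. The key conceptual point to highlight is that the existence of a \emph{common} $(n-m)$-dimensional kernel direction is exactly what allows all $N$ maps $W^i$ to factor through the same surjection $B$, after which each expert-specific factor $A^i$ is recovered by composition with any right inverse of $B$.
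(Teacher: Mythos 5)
Your proof is correct, and its overall architecture matches the paper's: necessity via $\ker(B) \subseteq \ker(A^i B) = \ker(W^i)$, sufficiency by constructing $B$ with $\ker(B) = K$. The two places where you diverge are both improvements in rigor. First, in the necessity direction the paper asserts that $B$ ``must be row-full-rank'' because exact factorization is required; this is not actually forced by the hypothesis (if every $W^i = 0$, any $B$ works, including $B = 0$), whereas your version only uses $\dim\ker(B) \geq n - m$ and then selects an $(n-m)$-dimensional subspace inside $\ker(B)$, which handles the degenerate case cleanly. Second, in the sufficiency direction the paper argues $K \subseteq \ker(W^i)$ implies $\mathrm{Row}(W^i) \subseteq \mathrm{Row}(B)$ and then simply asserts the existence of $A^i$; you instead produce the explicit formula $A^i = W^i B^+$ for a right inverse $B^+$ of $B$, and your verification via the oblique projection $B^+ B$ (idempotent, with $\ker(B^+ B) = \ker(B) = K \subseteq \ker(W^i)$, hence $W^i B^+ B = W^i$) is complete and correct. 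The paper's route is shorter; yours is constructive and would directly yield an algorithm for computing the $A^i$, so either is acceptable, but your write-up closes two small gaps the paper leaves open.
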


\begin{proof}
\textbf{Necessity:} Suppose there exist $B \in \mathbb{R}^{m \times n}$ and $A^i \in \mathbb{R}^{m \times m}$ such that $W^i = A^i B$ for all $i$. Since $m \leq n$ and we require exact factorization, $B$ must be row-full-rank (i.e., $\text{rank}(B) = m$). Consequently, its right nullspace $\ker(B)$ has dimension $n - m$. For any vector $x \in \ker(B)$, we have:
\begin{equation}
W^i x = A^i B x = A^i \cdot 0 = 0,
\end{equation}
which implies $\ker(B) \subseteq \ker(W^i)$ for all $i$. Setting $K = \ker(B)$, we obtain an $(n - m)$-dimensional subspace contained in the intersection of all $\ker(W^i)$.

\textbf{Sufficiency:} Suppose there exists an $(n - m)$-dimensional subspace $K \subseteq \mathbb{R}^n$ such that $K \subseteq \ker(W^i)$ for all $i$. We can construct $B \in \mathbb{R}^{m \times n}$ such that $\ker(B) = K$. Since $\dim(K) = n - m$, the matrix $B$ has rank $m$ by the rank-nullity theorem. For each $W^i$, the inclusion $K \subseteq \ker(W^i)$ implies that any vector mapped to zero by $B$ is also mapped to zero by $W^i$. By the fundamental theorem of linear algebra, this means $\text{Row}(W^i) \subseteq \text{Row}(B)$, where $\text{Row}(\cdot)$ denotes the row space. Therefore, there exists $A^i \in \mathbb{R}^{m \times m}$ such that $W^i = A^i B$ for each $i$.
\end{proof}

Theorem \ref{thm:1} provides a geometric interpretation of the factorization problem: exact factorization is possible only when the nullspaces of all expert matrices share a sufficiently large common subspace. In practical LLM implementations, however, this condition is rarely satisfied for FFN layers in MoE models, as our empirical analysis confirms.

Given that exact factorization is generally unattainable, we now consider how to minimize the approximation error through strategic rank reduction. The rank-nullity theorem \citep{lang_1987_linear} states that for any linear mapping $W: X \rightarrow Y$:
\begin{equation}
    \text{rank}(W) + \dim(\ker(W)) = \dim(X).
\end{equation}
In our context, $X$ represents the hidden space ($\mathbb{R}^n$) and $Y$ the MoE intermediate space ($\mathbb{R}^m$). Therefore, $\text{rank}(W^i) + \dim(\ker(W^i)) = n$ for all $i \in \{1,2,\ldots,N\}$.

This relationship suggests a strategic approach: by reducing the rank of each $W^i$, we can increase the dimension of its nullspace. Specifically, if we constrain each $W^i$ to have rank $r < m$, then $\dim(\ker(W^i)) = n - r > n - m$. This increases the probability of finding a substantial common subspace within the intersection $\bigcap_{i=1}^N \ker(W^i)$, thereby improving the quality of our factorization.

We implement this approach by computing low-rank approximations of each $W^i$ before attempting factorization. Importantly, our empirical experiments in Appendix \ref{sec:appendix1} demonstrate that this rank reduction has minimal impact on model performance, suggesting that these FFN operators in MoE models inherently possess low-rank structure that can be exploited for more efficient parameterization.

\section{Critical Role of the ``Up Operator''} \label{sec:appendix3}

While empirical experiments on Qwen1.5-MoE-A2.7B demonstrate that transferring all operators for MoE models can achieve superior performance, we observe that different operators contribute differentially to MoLAE transformation efficacy. This section presents empirical evidence establishing that the ``up operator'' encapsulates more essential information than other components in MoE models. Through systematic experimentation, we demonstrate the importance of preserving this operator's structure for maintaining model performance.

We examine a different MoE architecture, Moonlight-16B-A3B \citep{liu_2025_muon}, in which the critical role of the ``up operator'' is more pronounced. Consistent with our methodology in Section \ref{sec:appendix2}, we utilize the Moonlight-16B-A3B model with a fixed latent parameter $k=8$, as it contains $64$ experts per layer. To isolate the significance of the ``up operator,'' we implement the following distinct transformation approaches:

\begin{enumerate}
\item \textbf{Partial transformation (``up+gate'')} - Converts only the ``up operator'' and ``gate operator'' to theirMoLAE equivalents while preserving the original ``down operator.''
\item \textbf{Partial transformation (``up+down'')} - Converts only the ``up operator'' and ``down operator'' to theirMoLAE equivalents while preserving the original ``gate operator.''
\item \textbf{Partial transformation (``gate+down'')} - Converts only the ``gate operator'' and ``down operator'' to theirMoLAE equivalents while preserving the original ``up operator.''
\item \textbf{Complete transformation (``all'')} - Transforms all three components (``up,'' ``gate,'' and ``down'' operators) into their corresponding latent space representations.
\end{enumerate}

We evaluate these transformations on the MMLU and CEval \citep{huang_2023_ceval} tasks, with results summarized in Table \ref{table:6}.

\begin{table}[htbp]
\begin{center}
\caption{Performance comparison of Moonlight-16B-A3B under different transformation configurations. Results demonstrate the critical importance of preserving the ``up operator'' structure for maintaining model performance.}
\begin{tabular}{c | c c c c c}
\toprule
\textbf{Task} ($k=8$) & Original & ``up+gate'' & ``up+down'' & ``gate+down'' & ``all'' \\ 
\midrule
MMLU (\%) $\uparrow$ & \textbf{67.2} & 55.7 & 53.1 & 60.2 & 42.1 \\
CEval (\%) $\uparrow$ & \textbf{74.2} & 57.6 & 52.9 & 62.8 & 46.6 \\
\bottomrule
\end{tabular}
\label{table:6}
\end{center}
\end{table}

In contrast to Qwen-MoE, the Moonlight-MoE architecture exhibits more substantial performance degradation under MoLAE transformation. As evidenced in Table \ref{table:6}, preserving the ``up operator'' yields optimal performance on downstream tasks. This phenomenon demonstrates the disproportionate importance of the ``up operator'' relative to the other components.

These findings provide compelling evidence that the ``up operator'' encodes critical information that significantly influences model performance. When this operator is transformed into the latent space, substantial information loss occurs, resulting in markedly diminished capabilities across reasoning and knowledge-intensive tasks. This asymmetric importance among operators suggests that architectural modifications to MoE models should prioritize preserving the structure of the ``up operator'' to maintain performance integrity.




\section{Training Arguments} 
See Table \ref{table:2}.

\begin{table}[htbp]
\begin{center}
\caption{Model architecture and training hyperparameter configurations for MoE and MoLAE models.}
\begin{tabular}{c | c c}
\toprule
\textbf{Hyperparameters} & \textbf{MoE} & \textbf{MoLAE} \\ 
\midrule
FFN layers size & $151$M & $94$M \\ 
Vocabulary size & $50257$ & $50257$ \\
Number of layers & $12$ & $12$ \\ 
Number of attention heads & $8$ & $8$ \\
Hidden dimension $n$ & $512$ & $512$ \\ 
Intermediate dimension & $1024$ & $1024$ \\ 
MoE intermediate dimension $m$ & $256$ & $256$ \\ 
Number of experts $N$ & $32$ & $32$ \\
Experts per latent space $k$ & $1$ & $8$ \\
\midrule
Load balancing mechanism & Auxiliary loss & Auxiliary loss \\
Optimizer & AdamW & AdamW \\
Learning rate & $3 \times 10^{-4}$ & $3 \times 10^{-4}$ \\ 
Learning rate schedule & Cosine decay & Cosine decay \\
\bottomrule
\end{tabular}
\label{table:2}
\end{center}
\end{table}
\section{Refined MoLAE Transformation} \label{sec:appendix5}

In Section \ref{sec:theory}, we introduced the methodology to transform a standard MoE model into its corresponding MoLAE formulation. This section presents a refined approximation approach that incorporates activation information, resulting in enhanced precision.

Given activation matrices $X^i$ for $i \in \{1, 2, \ldots, N\}$, our objective is to determine low-rank factorization matrices $A^i$ and $B$ such that $W^i X^i \approx A^i B X^i$. While Section \ref{sec:theory} assumed that the effect of activation matrices could be eliminated—simplifying the problem to Equation \eqref{equ:5}—we now develop a more robust approximation that explicitly incorporates activation information.

\subsection{Problem Formulation}

We formulate the refined approximation as an optimization problem to find matrices $A^i$ and $B$ that minimize the sum of Frobenius norm differences between the original expert computations and their low-rank approximations:

\begin{equation} \label{equ:14}
    \min_{A^i, B} \quad F(A^1, \ldots, A^N, B) := \frac{1}{2} \sum_{i=1}^N \| W^i X^i - A^i B X^i \|_F^2
\end{equation}

To facilitate the solution, we introduce block matrices $W$, $A$, and $X$ defined as:

\begin{equation}
W = \begin{pmatrix}
W^1 \\ 
W^2 \\ 
\vdots \\
W^N
\end{pmatrix}, \quad 
A = \begin{pmatrix}
A^1 \\ 
A^2 \\ 
\vdots \\
A^N
\end{pmatrix}, \quad 
X = \text{diag}(X^1, X^2, \ldots, X^N)
\end{equation}

This block representation transforms the problem in Equation \eqref{equ:14} into an equivalent matrix factorization problem:

\begin{equation} \label{equ:16}
  \min_{A, B} \quad \frac{1}{2} \| WX - ABX \|_F^2  
\end{equation}

This formulation enables us to derive an activation-aware low-rank approximation that more accurately preserves the input-output relationships of the original expert modules compared to the activation-agnostic approach described in Section \ref{sec:theory}.

Problem \eqref{equ:16} cannot be solved directly using the Eckart-Young-Mirsky theorem due to the presence of the activation matrix $X$. We therefore establish the following theorem that characterizes an optimal solution to problem \eqref{equ:16}.

\begin{theorem} \label{thm:refined_mole}
Let $X$ be the activation matrix and $W$ be the weight matrix. Assume that $X^\top X$ is positive definite with Cholesky decomposition $X^\top X = L L^\top$ where $L$ is invertible.
Let $L^\top W = U \Sigma V^\top$ be the singular value decomposition of $L^\top W$ and $\Sigma_m$ be the truncated diagonal matrix containing the $m$ largest singular values, with corresponding truncated matrices $U_m$ and $V_m$.
Then $A^* = (L^\top)^{-1} U_m \Sigma_m^{1/2}$ and $B^* = \Sigma_m^{1/2} V_m^\top$ constitute an optimal solution to problem \eqref{equ:16}.
\end{theorem}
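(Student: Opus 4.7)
The plan is to reduce the activation-weighted Frobenius-norm problem in \eqref{equ:16} to the standard unweighted low-rank approximation problem already solved in Section \ref{subsec:trans}, by ``whitening'' through the Cholesky factor $L$ and then invoking the Eckart-Young-Mirsky theorem. Concretely, I would first expand $\|WX - ABX\|_F^2 = \|(W-AB)X\|_F^2$ using the trace identity so that $X$ enters only through the Gram matrix $X^\top X$. Under the positive-definiteness hypothesis with $X^\top X = L L^\top$, substituting the Cholesky factorization into the trace and re-collecting terms yields the whitening identity
\begin{equation*}
\|(W - AB)X\|_F^2 \;=\; \|L^\top(W - AB)\|_F^2 \;=\; \|L^\top W - L^\top A\, B\|_F^2,
\end{equation*}
which is the crux of the argument.

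Since $L^\top$ is invertible (as $X^\top X$ is positive definite), I would then change variables by setting $\tilde{A} := L^\top A$, which establishes a bijection between feasible $A$ and $\tilde{A}$. The problem reduces to the unweighted rank-$m$ Frobenius approximation
\begin{equation*}
\min_{\tilde{A}, B} \tfrac{1}{2}\|L^\top W - \tilde{A}\, B\|_F^2,
\end{equation*}
which is exactly of the form treated in problem \eqref{equ:6}. Applying the Eckart-Young-Mirsky theorem with the SVD $L^\top W = U \Sigma V^\top$ gives the optimal rank-$m$ factor $U_m \Sigma_m V_m^\top$, and the symmetric splitting $\tilde{A}^* = U_m \Sigma_m^{1/2}$, $B^* = \Sigma_m^{1/2} V_m^\top$ is one valid choice. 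Undoing the substitution yields $A^* = (L^\top)^{-1} U_m \Sigma_m^{1/2}$, matching the theorem.

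The main obstacle will be executing the whitening step cleanly given the block structure of \eqref{equ:16}, where $W$ and $A$ are stacked vertically across experts while $X$ is block diagonal. I would need to check that the trace/Cholesky manipulation respects this block structure, so that the per-expert activation Grams assemble into the single $X^\top X$ whose Cholesky factor is $L$, and confirm that $X^\top X$ (rather than $XX^\top$) is indeed the appropriate Gram matrix for the intended multiplication conventions. A minor secondary point, as already noted for problem \eqref{equ:6}, is non-uniqueness: the optimal pair $(\tilde{A}^*, B^*)$ is only determined up to an invertible rescaling on the latent space, so the theorem should be read as exhibiting one optimal solution rather than the unique minimizer.
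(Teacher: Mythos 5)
Your proposal follows essentially the same route as the paper's proof: expand the objective via the trace identity, whiten through the Cholesky factor to obtain $\|L^\top W - L^\top A B\|_F^2$, invoke the Eckart--Young--Mirsky theorem on the SVD of $L^\top W$, and undo the substitution $\tilde{A} = L^\top A$ (the paper merely phrases this last step as a two-sided inequality rather than an explicit bijective change of variables, which is a cosmetic difference). The caveat you raise about whether $X^\top X$ or $X X^\top$ is the correct Gram matrix for the multiplication convention $\|(W-AB)X\|_F$ is a legitimate one --- the paper's own derivation silently passes from the hypothesis $X^\top X = LL^\top$ to using $XX^\top = LL^\top$ in the trace chain --- but since your argument and the paper's share this convention issue identically, the approaches coincide.
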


\begin{proof}
Since $X^\top X$ is positive definite, there exists a unique Cholesky decomposition $X^\top X = L L^\top$ where $L$ is invertible. Utilizing this decomposition, we can transform the original optimization problem as follows:
\begin{equation*}
\begin{aligned}
\| WX - ABX \|_F^2 &= \text{Tr}[(WX - ABX)^\top(WX - ABX)] \\
&= \text{Tr}[(X^\top(W-AB)^\top(W-AB)X] \\
&= \text{Tr}[(W-AB)^\top X X^\top (W-AB)] \\
&= \text{Tr}[(W-AB)^\top L L^\top (W-AB)] \\
&= \text{Tr}[(L^\top(W-AB))^\top(L^\top(W-AB))] \\
&= \| L^\top W - L^\top AB \|_F^2 \\
&= \| \tilde{W} - L^\top AB \|_F^2,
\end{aligned}
\end{equation*}
where $\tilde{W} = L^\top W$.
Given that $\tilde{W} = U \Sigma V^\top$ is the SVD of $\tilde{W}$, by the Eckart-Young-Mirsky theorem, the best rank-$m$ approximation of $\tilde{W}$ is $\tilde{W}_m = U_m \Sigma_m V_m^\top$, which can be factorized as $\tilde{W}_m = \tilde{A} \tilde{B}$ where $\tilde{A} = U_m \Sigma_m^{1/2}$ and $\tilde{B} = \Sigma_m^{1/2} V_m^\top$.

We now demonstrate that $A^* = (L^\top)^{-1} \tilde{A}$ and $B^* = \tilde{B}$ constitute an optimal solution to the original problem. First, for any matrices $A$ and $B$ with appropriate dimensions:
\begin{equation}
\min_{A, B} \| L^\top AB - \tilde{W} \|_F \leq \| L^\top A^*B^* - \tilde{W} \|_F = \| \tilde{A} \tilde{B} - \tilde{W} \|_F
\end{equation}

Conversely, since $\tilde{A}\tilde{B}$ is the optimal rank-$m$ approximation of $\tilde{W}$:
\begin{equation}
\| \tilde{A}\tilde{B} - \tilde{W} \|_F = \min_{\text{rank}(T) \leq m} \| T - \tilde{W} \|_F \leq \| L^\top AB - \tilde{W} \|_F, \quad \forall A, B
\end{equation}

The final inequality holds because $\text{rank}(L^\top AB) \leq \text{rank}(AB) \leq m$ for any feasible solution $(A,B)$.

Combining these inequalities:
\begin{equation}
\min_{A, B} \| L^\top AB - \tilde{W} \|_F = \| L^\top A^* B^* - \tilde{W} \|_F
\end{equation}

Therefore, $(A^*, B^*)$ is an optimal solution to problem \eqref{equ:16}, which completes the proof.
\end{proof}

Based on Theorem \ref{thm:refined_mole}, we propose the refined Algorithm \ref{alg:refined_mole} as follows. 

\begin{algorithm}
\small
\caption{Refined MoLAE Transformation}
\label{alg:refined_mole}
\begin{algorithmic}[1]
\Require Weight matrices $W^1, W^2, \ldots, W^N$, activation matrices $X^1, X^2, \ldots, X^N$, and target rank $m$
\Ensure Low-rank factorization matrices $A^1, A^2, \ldots, A^N$ and $B$

\State Construct block matrices $W = \begin{pmatrix} W^1 \\ W^2 \\ \vdots \\ W^N \end{pmatrix}$ and $X = \text{diag}(X^1, X^2, \ldots, X^N)$

\State Compute $X^\top X$
\If{$X^\top X$ is singular}
    \State Apply regularization: $X^\top X \gets X^\top X + \lambda I$ for a small $\lambda > 0$
\EndIf

\State Compute the Cholesky decomposition: $X^\top X = LL^\top$

\State Compute $\tilde{W} = L^\top W$

\State Perform SVD on $\tilde{W}$: $\tilde{W} = U\Sigma V^\top$

\State Extract the $m$ largest singular values and corresponding singular vectors:
\Statex \quad $\Sigma_m$, $U_m$, and $V_m$

\State Compute $\tilde{A} = U_m \Sigma_m^{1/2}$

\State Compute $B = \Sigma_m^{1/2} V_m^\top$

\State Compute $A = (L^\top)^{-1} \tilde{A}$

\State Extract blocks of $A$ to obtain $A^1, A^2, \ldots, A^N$

\State \Return $A^1, A^2, \ldots, A^N, B$
\end{algorithmic}
\end{algorithm}
\section{Limitations} \label{sec:limitations}

We acknowledge certain limitations inherent to the present investigation. Constraints on available computational resources precluded evaluations of models at exceptionally large scales, notably DeepSeek-V3-671B and DeepSeek-R1-671B. Consequently, our empirical analysis was performed utilizing the Moonlight-16B-A3B model. This model employs an architectural design identical to that of DeepSeek-R1-671B, as elaborated upon in Appendix \ref{sec:appendix3}.
\end{appendices}

\newpage

\end{document}